\def\R{\mathbb R}
\def\Pr{\mathrm{Pr}}
\DeclareMathOperator*{\argmax}{arg\,max}
\DeclareMathOperator*{\argmin}{arg\,min}
\def\A{\mathcal A}
\def\E{\mathcal E}
\def\Norm{\mathcal N}
\newtheorem{theorem}{Theorem}
\newtheorem{lemma}{Lemma}
\newtheorem{corollary}{Corollary}
\begin{document}

\runningauthor{}

\twocolumn[
\aistatstitle{Exploiting correlation and budget constraints in Bayesian
multi-armed bandit optimization}
\aistatsauthor{Matthew W.~Hoffman\And Bobak Shahriari\And Nando de Freitas}
\aistatsaddress{Cambridge University\And University of British Columbia\And Oxford University}
]

\begin{abstract}
We address the problem of finding the maximizer of a nonlinear smooth function,
that can only be evaluated point-wise, subject to constraints on the number of
permitted function evaluations.  This problem is also known as fixed-budget best
arm identification in the multi-armed bandit literature. We introduce a Bayesian
approach for this problem and show that it empirically outperforms both the
existing frequentist counterpart and other Bayesian optimization methods. The
Bayesian approach places emphasis on detailed modelling, including the modelling
of correlations among the arms. As a result, it can perform well in situations
where the number of arms is much larger than the number of allowed function
evaluation, whereas the frequentist counterpart is inapplicable. This feature
enables us to develop and deploy practical applications, such as automatic
machine learning toolboxes.  The paper presents comprehensive comparisons of the
proposed approach, Thompson sampling, classical Bayesian optimization
techniques, more recent Bayesian bandit approaches, and state-of-the-art best
arm identification methods. This is the first comparison of many of these
methods in the literature and allows us to examine the relative merits of their
different features.
\end{abstract}

\section{Introduction}

We address the problem of finding the maximizer of a nonlinear smooth function
$f: {\cal A} \mapsto \mathbb{R}$ which can only be evaluated point-wise. The
function need not be convex, its derivatives may not be known, and the function
evaluations will generally be corrupted by some form of noise. Importantly, we
are interested in functions that are typically expensive to evaluate. Moreover,
we will also assume a finite budget of $T$ function evaluations. This
fixed-budget global optimization problem can be treated within the framework of
sequential design. In this context, by allowing function queries $a_t\in\A$ to
depend on previous points and their corresponding function evaluations, the
algorithm must adaptively construct a sequence of queries (or actions) $a_{1:T}$
and afterwards return the element of highest expected value.

A typical example of this problem is that of automatic product testing
\citep{kohavi-abtesting, scott-bandits}, where common ``products'' correspond to
configuration options for ads, websites, mobile applications, and online games.
In this scenario, a company offers different product variations to a small
subset of customers, with the goal of finding the most successful product for
the entire customer base. The crucial problem is how best to query the smaller
subset of users in order to find the best product with high probability.  A
second example, analyzed later in this paper, is that of automating machine
learning. Here, the goal is to automatically select the best technique
(boosting, random forests, support vector machines, neural networks, etc.) and
its associated hyper-parameters for solving a machine learning task with a given
dataset. For big datasets, cross-validation is very expensive and hence it is
often important to find the best technique within a fixed budget of
cross-validation tests (function evaluations).

In order to properly attack this problem there are three design aspects that
must be considered. By taking advantage of \emph{correlation} among different
actions it is possible to learn more about a function than just its value at a
specific query. This is particularly important when the number of actions
greatly exceeds the \emph{finite query budget}. In this same vein, it is
important to take into account that a recommendation must be made at time $T$ in
order to properly allocate actions and explore the space of possible optima.
Finally, the fact that we are interested only in the value of the recommendation
made at time $T$ should be handled explicitly. In other words, we are only
interested in finding the \emph{best action} and are concerned with the rewards
obtained during learning only insofar as they inform us about this optimum.

In this work, we introduce a Bayesian approach that meets the above design goals
and show that it empirically outperforms the existing frequentist counterpart
\citep{gabillon-unified}. The Bayesian approach places emphasis on detailed
modelling, including the modelling of correlations among the arms. As a result,
it can perform well in situations where the number of arms is much larger than
the number of allowed function evaluation, whereas the frequentist counterpart
is inapplicable.  The paper presents comprehensive comparisons of the proposed
approach, Thompson sampling, classical Bayesian optimization techniques, more
recent Bayesian bandit approaches, and state-of-the-art best arm identification
methods. This is the first comparison of many of these methods in the literature
and allows us to examine the relative merits of their different features. The
paper also shows that one can easily obtain the same theoretical guarantees for
the Bayesian approach that were previously derived in the frequentist setting
\citep{gabillon-unified}.

\section{Related work}

Bayesian optimization has enjoyed success in a broad range of optimization
tasks; see the work of \cite{brochu-tutorial} for a broad overview.  Recently,
this approach has received a great deal of attention as a black-box technique
for the optimization of hyperparameters \citep{snoek:2012b, Hutter:smac,
Wang:rembo}.  This type of optimization combines prior knowledge about the
objective function with previous observations to estimate the posterior
distribution over $f$. The posterior distribution, in turn, is used to construct
an \emph{acquisition function} that determines what the next query point $a_t$
should be. Examples of acquisition functions include probability of improvement
(PI), expected improvement (EI), Bayesian upper confidence bounds (UCB), and
mixtures of these \citep{Mockus:1982, Jones:2001, Srinivas:2010, Hoffman:2011}.
One of the key strengths underlying the use of Bayesian optimization is the
ability to capture complicated correlation structures via the posterior
distribution.

Many approaches to bandits and Bayesian optimization focus on online learning
(\emph{e.g.}, minimizing cumulative regret) as opposed to optimization
\citep{Srinivas:2010,Hoffman:2011}. In the realm of optimizing deterministic
functions, a few works have proven exponential rates of convergence for simple
regret \citep{zoghi-detbo,Munos:2011}.  A stochastic variant of the work of
\citeauthor{Munos:2011} has been recently proposed by \cite{Valko:SSOO}; this
approach takes a tree-based structure for expanding areas of the optimization
problem in question, but it requires one to evaluate each cell many times before
expanding, and so may prove expensive in terms of the number of function
evaluations.

The problem of optimization under budget constraints has received relatively
little attention in the Bayesian optimization literature, though some approaches
without strong theoretical guarantees have been proposed recently
\citep{Azimi:2011, hennig-entropy,snoek-oppcost,villemonteix-iago}. In contrast,
optimization under budget constraints has been studied in significant depth in
the setting of multi-armed bandits \citep{bubeck-pure, audibert-best,
gabillon-multi, gabillon-unified}. Here, a decision maker must repeatedly choose
query points, often discrete and known as ``arms'', in order to observe their
associated rewards \citep{cesa-bianchi-book}. However, unlike most methods in
Bayesian optimization the underlying value of each action is generally assumed
to be independent from all other actions. That is, the correlation structure of
the arms is often ignored.

\section{Problem formulation}
\label{sec:problem}

In order to attack the problem of Bayesian optimization from a bandit
perspective we will consider a discrete collection of arms $\A=\{1,\dots,K\}$
such that the immediate reward of pulling arm $k\in\A$ is characterized by a
distribution $\nu_k$ with mean $\mu_k$. From the Bayesian optimization
perspective we can think of this as a collection of points $\{a_1,\dots,a_K\}$
where $\mu_k=f(a_k)$. Note that while we will assume the distributions $\nu_k$
are independent of past actions this \emph{does not} mean that the means of each
arm cannot share some underlying structure---only that the act of pulling arm
$k$ does not affect the future rewards of pulling this or any other arm. This
distinction will be relevant later in this section.

The problem of identifying the best arm in this bandit problem can now be
introduced as a sequential decision problem. At each round $t$ the decision
maker will select or ``pull'' an arm $a_t\in\A$ and observe an independent
sample $y_t$ drawn from the corresponding distribution $\nu_{a_t}$. At the
beginning of each round $t$, the decision maker must decide which arm to select
based only on previous interactions, which we will denote with the tuple
$(a_{1:t-1}, y_{1:t-1})$. For any arm $k$ we can also introduce the expected
immediate regret of selecting that arm as
\begin{equation}
    R_k = \mu^* - \mu_k,
\end{equation}
where $\mu^*$ denotes the expected value of the best arm. Note that while we
are interested in finding the arm with the minimum regret, the exact value of
this quantity is unknown to the learner.

In standard bandit problems the goal is generally to minimize the cumulative sum
of immediate regrets incurred by the arm selection process. Instead, in this
work we consider the \emph{pure exploration} setting \citep{bubeck-pure,
audibert-best}, which divides the sampling process into two phases: exploration
and evaluation. The exploration phase consists of $T$ rounds wherein a decision
maker interacts with the bandit process by sampling arms. After these rounds,
the decision maker must make a single arm recommendation $\Omega_T\in\A$. The
performance of the decision maker is then judged \emph{only} on the performance
of this recommendation. The expected performance of this single recommendation
is known as the \emph{simple regret}, and we can write this quantity as
$R_{\Omega_T}$. Given a tolerance $\epsilon>0$ we can also define the
\emph{probability of error} as the probability that $R_{\Omega_T}>\epsilon$. In
this work, we will consider both the empirical probability that our regret
exceeds some $\epsilon$ as well as the actual reward obtained.

\section{Bayesian bandits}
\label{sec:bayesian}

We will now consider a bandit problem wherein the distribution of rewards for
each arm is assumed to depend on unknown parameters $\theta\in\Theta$ that are
shared between all arms. We will write the reward distribution for arm $k$ as
$\nu_k(\cdot|\theta)$. When considering the bandit problem from a Bayesian
perspective, we will assume a prior density $\theta\sim \pi_0(\cdot)$ from which
the parameters are drawn. Next, after $t-1$ rounds we can write the posterior
density of these parameters as
\begin{equation}
    \pi_t(\theta)
    \propto \pi_0(\theta)
    \prod_{n<t} \nu_{a_n}(y_n|\theta).
\end{equation}
Here we can see the effect of choosing arm $a_n$ at each time $n$: we obtain
information about $\theta$ only indirectly by way of the likelihood of these
parameters given reward observations $y_n$.  Note that this also generalizes the
\emph{uncorrelated} arms setting.  If the rewards for each arm $k$ depend only
on a parameter (or set of parameters) $\theta_k$, then at time $t$ the posterior
for that parameter would only depend on those times in the past that we had
pulled arm $k$.

We are, however, only partially interested in the posterior distribution of the
parameters $\theta$. Instead, we are primarily concerned with the expected
reward for each arm under these parameters, which can be written as $\mu_k =
\mathbb E[Y|\theta] = \int y \,\nu_k(y|\theta) \,dy$. The true value of $\theta$
is unknown, but we have access to the posterior distribution $\pi_t(\theta)$.
This distribution induces a marginal distribution over $\mu_k$, which we will
write as $\rho_{kt}(\mu_k)$.  The distribution $\rho_{kt}(\mu_k)$ can then be
used to define upper and lower confidence bounds that hold with high probability
and, hence, engineer acquisition functions that trade-off exploration and
exploitation. We will derive an analytical expression for this distribution
next.

We will assume that each arm $k$ is associated with a feature vector
$x_k\in\R^d$ and where the rewards for pulling arm $k$ are normally distributed
according to
\begin{equation}
\label{eq:observation}
    \nu_k(y|\theta) = \Norm(y; x_k^T\theta, \sigma^2)
\end{equation}
with variance $\sigma^2$ and unknown $\theta\in\R^d$. The rewards for each arm
are independent conditioned on $\theta$, but marginally dependent when this
parameter is unknown. In particular the level of their dependence is given by
the structure of the vectors $x_k$. By placing a prior $\theta\sim
\Norm(0,\eta^2I)$ over the entire parameter vector we can compute a posterior
distribution over this unknown quantity. One can also easily place an
inverse-Gamma prior on $\sigma$ and compute the posterior analytically, but we
will not describe this in order to keep the presentation simple.

The above linear observation model might seem restrictive. However, because we
are only considering $K$ discrete actions (arms), it includes the Gaussian
process (GP) setting.  More precisely, let the matrix $G\in\R^{K\times K}$ be
the covariance of a GP prior. Our experiments will detail two ways of
constructing this covariance in practice. We can apply the following
transformation to construct the design matrix $X = [x_1\dots x_K]^T$:
\begin{equation*}
    X = VD^{\frac12},\ \text{where}\ G=VDV^T.
\end{equation*}
The rows of $X$ correspond to the vectors $x_k$ necessary for the construction
of the observation model in Equation~\eqref{eq:observation}.   By restricting
ourselves to discrete actions spaces, we can also implement strategies such a
Thompson sampling with GPs. The restriction to discrete action spaces poses some
scaling challenges in high-dimensions, but it enables us to deploy a broad set
of algorithms to attack low-dimensional problems. For this pragmatic reason,
many existing popular Bayesian optimization software tools consider discrete
actions only.

We will now let $X_t=[x_{a_1}\dots x_{a_{t-1}}]^T$ denote the design matrix and
$Y_t=[y_1 \dots y_{t-1}]^T$ the vector of observations at the beginning of round
$t$. We can then write the posterior at time $t$ as $\pi_t(\theta) =
\Norm(\theta; \hat\theta_t, \hat\Sigma_t)$, where
\begin{align}
    \hat\Sigma_t^{-1}
    &= \sigma^{-2}X_t^T X_t + \eta^{-2}I,
    \text{ and}\quad \\
    \hat\theta_t
    &= \sigma^{-2}\hat\Sigma_t X_t^TY_t.
\end{align}
From this formulation we can see that the expected reward associated with arm
$k$ is marginally normal $\rho_{kt}(\mu_k)=\mathcal N(\mu_k; \hat\mu_{kt}, \hat
\sigma^2_{kt})$ with mean $\hat\mu_{kt}=x_k^T\hat\theta_t$ and variance
$\hat\sigma_{kt}^2=x_k^T\hat\Sigma_t x_k$. Note also that the predictive
distribution over rewards associated with the $k$th arm is normal as well, with
mean $\hat\mu_{kt}$ and variance $\hat\sigma_{kt}^2+\sigma^2$. The previous
derivations are textbook material; see for example Chapter 7 of
\citep{Murphy:2012}.

\begin{figure}
\centering
\includegraphics[width=\columnwidth]{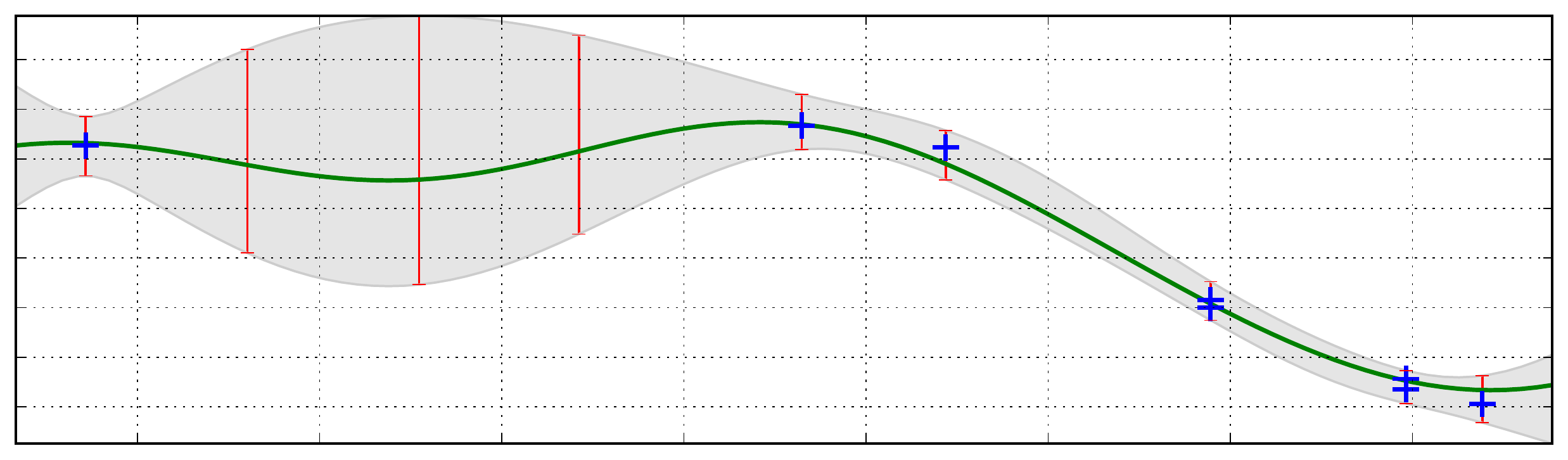}
\caption{\footnotesize Example GP setting with discrete arms. The full GP is
plotted with observations and confidence intervals at each of $K=10$ arms (mean
and confidence intervals of $\rho_{kt}(\mu_k)$). Shown in green is a single
sample from the GP.}
\label{fig:example}
\end{figure}

Figure~\ref{fig:example} depicts an example of the mean and confidence intervals
of $\rho_{kt}(\mu_k)$, as well as a single random sample. Here the features
$x_k$ were constructed by first forming the covariance matrix with an
exponential kernel $k(x,x')=\text e^{-(x-x')^2}$ over the 1-dimensional discrete
domain. As with standard Bayesian optimization with GPs, the statistics of
$\rho_{kt}(\mu_k)$ enable us to construct many different acquisition functions
that trade-off exploration and exploitation. Thompson sampling in this setting
also becomes straightforward, as we simply have to pick the maximum of the
random sample from $\rho_{kt}(\mu_k)$, at one of the discrete arms, as the next
point to query.

\section{Bayesian gap-based exploration}
\label{sec:gap}

In this section we will introduce a gap-based solution to the Bayesian
optimization problem, which we call BayesGap. This approach builds on the work
of \cite{gabillon-multi,gabillon-unified}, which we will refer to as
UGap\footnote{Technically this is UGapEb, denoting bounded horizon, but as we do
not consider the fixed-confidence variant in this paper we simplify the
acronym.}, and offers a principled way to incorporate correlation between
different arms (whereas the earlier approach assumes all arms are independent).

At the beginning of round $t$ we will assume that the decision maker is equipped
with high-probability upper and lower bounds $U_k(t)$ and $L_k(t)$ on the
unknown mean $\mu_k$ for each arm. While this approach can encompass more
general bounds, for the Gaussian-arms setting that we consider in this work we
can define these quantities in terms of the mean and standard deviation, i.e.\
$\hat\mu_{kt}\pm \beta\hat\sigma_{kt}$. These bounds also give rise to a
confidence diameter $s_k(t)=U_k(t)-L_k(t)=2\beta\hat\sigma_{kt}$.

Given bounds on the mean reward for each
arm, we can then introduce the gap quantity
\begin{equation}
    B_k(t) = \max_{i\neq k} U_i(t) - L_k(t),
\end{equation}
which involves a comparison between the lower bound of arm $k$ and the highest
upper bound among all alternative arms. Ultimately this quantity provides an
upper bound on the simple regret (see Lemma \ref{lemma:arm-regret-bound} in the
supplementary material) and will be used to define the exploration strategy.
However, rather than directly finding the arm minimizing this gap, we will
consider the two arms
\begin{align*}
    J(t) &= \argmin_{k\in\A} B_k(t) \text{ and} \\
    j(t) &= \argmax_{k\neq J(t)} U_k(t).
\end{align*}
We will then define the exploration strategy as
\begin{align}
    a_t &= \argmax_{k\in\{j(t), J(t)\}} s_k(t).
\end{align}
Intuitively this strategy will select either the arm minimizing our bound on the
simple regret (i.e.\ $J(t)$) or the best ``runner up'' arm. Between these two,
the arm with the highest uncertainty will be selected, i.e.\ the one expected to
give us the most information. Next, we will define the recommendation strategy
as
\begin{align}
    \Omega_T &= J\big( \argmin_{t\leq T} B_{J(t)}(t) \big),
    \label{eqn:budget-omega}
\end{align}
i.e.\ the proposal arm $J(t)$ which minimizes the regret bound, over all times
$t\leq T$. The reason behind this particular choice is subtle, but is necessary
for the proof of the method's simple regret bound\footnote{See inequality (b) in
the the supplementary material.}. In Algorithm~\ref{alg:gap} we show the
pseudo-code for BayesGap.

\begin{algorithm}[t]
\caption{BayesGap}
\label{alg:gap}
\begin{algorithmic}[1]
    \FOR{$t=1,\dots,T$}
        \STATE set $J(t) = \argmin_{k\in\A} B_k(t)$
        \STATE set $j(t) = \argmin_{k\neq J(t)} U_k(t)$
        \STATE select arm $a_t = \argmax_{k\in\{j(t),J(t)\}}s_k(t)$
        \STATE observe $y_t\sim\nu_{a_t}(\cdot)$
        \STATE update posterior $\hat\mu_{kt}$ and $\hat\sigma_{kt}$
        \STATE update bound on $H_\epsilon$ and re-compute $\beta$
        \STATE update posterior bounds $U_k(t)$ and $L_k(t)$
    \ENDFOR
    \RETURN $\Omega_T= J\big( \argmin_{t\leq T} B_{J(t)}(t) \big)$
\end{algorithmic}
\end{algorithm}

We now turn to the problem of which value of $\beta$ to use. First, consider the
quantity $\Delta_k=|\max_{i\neq k} \mu_i - \mu_k|$. For the best arm this
coincides with a measure of the distance to the second-best arm, whereas for all
other arms it is a measure of their sub-optimality. Given this quantity let
$H_{k\epsilon} = \max(\tfrac12 (\Delta_k+\epsilon), \epsilon)$ be an
arm-dependent hardness quantity; essentially our goal is to reduce the
uncertainty in each arm to below this level, at which point with high
probability we will identify the best arm. Now, given $H_\epsilon=\sum_k
H_{k\epsilon}^{-2}$ we define our exploration constant as
\begin{equation}
    \beta^2 =
    \big((T-K)/\sigma^2+\kappa/\eta^2\big) \big/ (4H_\epsilon)
    \label{eq:beta}
\end{equation}
where $\kappa=\sum_k \|x_k\|^{-2}$.  We have chosen $\beta$ such that with high
probability we recover an $\epsilon$-best arm, as detailed in the following
theorem. This theorem relies on bounding the uncertainty for each arm by a
function of the number of times that arm is pulled. Roughly speaking, if this
bounding function is monotonically decreasing and if the bounds $U_k$ and $L_k$
hold with high probability we can then apply Theorem~\ref{theorem:budget-bound}
to bound the simple regret of BayesGap\footnote{The additional Theorem is in
supplementary material and is a slight modification of that in
\citep{gabillon-unified}.}.

\begin{theorem}
    \label{cor:gaussian}
    Consider a $K$-armed Gaussian bandit problem, horizon $T$, and upper and
    lower bounds defined as above.  For $\epsilon>0$ and $\beta$ defined as in
    Equation (\ref{eq:beta}), the algorithm attains simple regret satisfying
    $\Pr(R_{\Omega_T}\leq\epsilon) \geq 1-KTe^{-\beta^2/2}$.
\end{theorem}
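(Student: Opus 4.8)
The plan is to reduce Theorem~\ref{cor:gaussian} to the generic best-arm bound referenced as Theorem~\ref{theorem:budget-bound}, whose hypotheses are (i) that the upper and lower bounds $U_k(t), L_k(t)$ hold simultaneously for all arms and all rounds with high probability, and (ii) that the per-arm confidence diameter $s_k(t)$ is controlled by a monotonically decreasing function of the number of pulls of arm $k$. Under those hypotheses that theorem gives an $\epsilon$-correct recommendation provided $\beta$ is large enough relative to the hardness $H_\epsilon$; our job is to verify both hypotheses in the Gaussian/linear-model setting and then check that the $\beta$ of Equation~\eqref{eq:beta} is the right size.

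First I would establish the high-probability validity of the bounds. Since $\rho_{kt}(\mu_k)=\Norm(\mu_k;\hat\mu_{kt},\hat\sigma_{kt}^2)$, conditioning on the event that $\theta$ is the true parameter drawn from the prior, each $\mu_k$ is a Gaussian with the stated mean and variance, so a standard Gaussian tail bound gives $\Pr(|\mu_k-\hat\mu_{kt}|>\beta\hat\sigma_{kt})\le e^{-\beta^2/2}$; hence $U_k(t)=\hat\mu_{kt}+\beta\hat\sigma_{kt}$ and $L_k(t)=\hat\mu_{kt}-\beta\hat\sigma_{kt}$ fail for a fixed $(k,t)$ with probability at most $e^{-\beta^2/2}$. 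A union bound over the $K$ arms and $T$ rounds then shows that all bounds hold simultaneously with probability at least $1-KTe^{-\beta^2/2}$, which is exactly the failure probability appearing in the statement. On this "good" event I can invoke Theorem~\ref{theorem:budget-bound}.

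Second I would verify the monotone-decreasing envelope for $s_k(t)=2\beta\hat\sigma_{kt}$. The variance $\hat\sigma_{kt}^2 = x_k^T\hat\Sigma_t x_k$ with $\hat\Sigma_t^{-1}=\sigma^{-2}X_t^TX_t+\eta^{-2}I$ is non-increasing as observations accumulate, and if arm $k$ has been pulled $n$ times then a rank-one/Woodbury argument (or simply dropping all off-arm contributions and using $\hat\Sigma_t^{-1}\succeq (n\sigma^{-2}\|x_k\|^{-2})^{-1}$-type reasoning along the $x_k$ direction, together with the prior term $\eta^{-2}I$) yields a bound of the form $\hat\sigma_{kt}^2 \le \big(n/\sigma^2 + \|x_k\|^{-2}/\eta^2\big)^{-1} =: b_k(n)^2$, which is manifestly decreasing in $n$. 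This identifies the bounding function needed by the generic theorem and, crucially, fixes the constants that feed into $H_\epsilon$; summing $b_k(\cdot)^{-2}$-type expressions over arms is where the $(T-K)/\sigma^2$ and $\kappa/\eta^2 = \sum_k\|x_k\|^{-2}/\eta^2$ terms in Equation~\eqref{eq:beta} come from, via the pigeonhole fact that the $T$ pulls are distributed among $K$ arms so some arm is pulled at least $(T-K)/K$-ish times in the relevant accounting.

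Finally I would plug the envelope $b_k(n)$ and the hardness definition $H_{k\epsilon}=\max(\tfrac12(\Delta_k+\epsilon),\epsilon)$ into the condition Theorem~\ref{theorem:budget-bound} imposes on $\beta$ — essentially that $\beta \ge$ (something like) the value making $b_k$ drop below $H_{k\epsilon}$ after the guaranteed number of pulls — and check algebraically that $\beta^2 = \big((T-K)/\sigma^2+\kappa/\eta^2\big)/(4H_\epsilon)$ satisfies it. I expect the main obstacle to be exactly this last bookkeeping step: matching the Bayesian variance envelope and the combinatorial pull-counting to the precise inequality chain in the adapted version of \citep{gabillon-unified}, including tracking the factor of $4$ and the role of $\kappa$, and making sure the argument that "on the good event the recommendation strategy $\Omega_T=J(\argmin_{t\le T} B_{J(t)}(t))$ returns an $\epsilon$-best arm" goes through with these constants rather than the independent-arms constants of the original. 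The Gaussian tail and monotonicity parts are routine; the delicate part is confirming that the Bayesian instantiation does not weaken the generic theorem's requirement on $\beta$.
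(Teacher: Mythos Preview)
Your proposal is essentially the paper's own proof: reduce to Theorem~\ref{theorem:budget-bound}, check the Gaussian tail bound $e^{-\beta^2/2}$ per $(k,t)$, and obtain a monotone envelope on $s_k(t)$ by dropping all off-arm contributions from $X_t^TX_t$ and applying Sherman--Morrison; your envelope $b_k(n)^2=(n/\sigma^2+\|x_k\|^{-2}/\eta^2)^{-1}$ is exactly the paper's $g_k(N)^2/(4\beta^2)$.

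One clarification on the step you flag as uncertain. The appearance of $(T-K)/\sigma^2$ and $\kappa/\eta^2$ is not a pigeonhole or pull-counting argument; it is purely algebraic. Theorem~\ref{theorem:budget-bound} requires $\sum_k g_k^{-1}(H_{k\epsilon})\le T-K$, and from your envelope one computes
\[
g_k^{-1}(s)=\frac{4\beta^2\sigma^2}{s^2}-\frac{\sigma^2}{\eta^2\|x_k\|^2},
\]
so summing over $k$ gives $4\beta^2\sigma^2 H_\epsilon-\sigma^2\kappa/\eta^2$. Setting this equal to $T-K$ and solving for $\beta^2$ yields Equation~\eqref{eq:beta} directly, with the factor~$4$ coming from $s_k=2\beta\hat\sigma_{kt}$. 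No separate accounting of how pulls are distributed is needed here; that accounting lives inside the proof of Theorem~\ref{theorem:budget-bound} itself.
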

\begin{proof}
    Using the definition of the posterior variance for arm $k$, we can write the
    confidence diameter as
    \begin{align*}
        s_k(t)
        &= 2\beta\sqrt{x_k^T\hat\Sigma_t x_k} \\
        &=
        2\beta
        \sqrt{
            \sigma^2 x_k^T
            \big(\textstyle\sum_i N_i(t-1)\, x_ix_i^T + \tfrac{\sigma^2}{\eta^2}I
            \big)^{-1}
            x_k} \\
        &\leq
        2\beta
        \sqrt{
            \sigma^2 x_k^T
            \big(N_k(t-1)\, x_kx_k^T + \tfrac{\sigma^2}{\eta^2}I
            \big)^{-1}
            x_k}.
    \end{align*}
    In the second equality we decomposed the Gram matrix $X_t^TX_t$ in terms of
    a sum of outer products over the fixed vectors $x_i$. In the final
    inequality we noted that by removing samples we can only increase the
    variance term, i.e.\ here we have essentially replaced $N_i(t-1)$ with $0$
    for $i\neq k$. We will let the result of this final inequality define an
    arm-dependent bound $g_k$. Letting $A=\tfrac1N\tfrac{\sigma^2}{\eta^2}$ we
    can simplify this quantity using the Sherman-Morrison formula as
    \begin{align*}
        g_k(N)
        &=
        2\beta\sqrt{
            (\sigma^2/N)
            x_k^T \big(x_kx_k^T + AI\big)^{-1} x_k} \\
        &=
        2\beta
        \sqrt{
            \frac{\sigma^2}{N}
            \frac{\|x_k\|^2}{A}
            \Big(
            1 - \frac{\|x_k\|^2/A}{1+\|x_k\|^2/A}
            \Big)} \\
        &=
        2\beta
        \sqrt{\frac{\sigma^2\|x_k\|^2}{\tfrac{\sigma^2}{\eta^2} + N\|x_k\|^2}},
    \end{align*}
    which is monotonically decreasing in $N$. The inverse of this function can
    be solved for as
    \begin{equation*}
        g^{-1}_k(s) =
        \frac{4(\beta\sigma)^2}{s^2} -
        \frac{\sigma^2}{\eta^2}
        \frac{1}{\|x_k\|^2}.
    \end{equation*}
    By setting $\sum_k g_k^{-1}(H_{k\epsilon})=T-K$ and solving for $\beta$ we
    then obtain the definition of this term given in the statement of the
    proposition. Finally, by reference to Lemma~\ref{lemma:gaussian-deviation}
    (supplementary material) we can see that for each $k$ and $t$, the upper and
    lower bounds must hold with probability $1-e^{-\beta/2}$. These last two
    statements satisfy the assumptions of Theorem~\ref{theorem:budget-bound}
    (supplementary material), thus concluding our proof.
\end{proof}


Here we should note that while we are using Bayesian methodology to drive the
exploration of the bandit, we are analyzing this using frequentist regret
bounds. This is a common practice when analyzing the regret of Bayesian bandit
methods \citep{Srinivas:2010,kaufmann-bayesucb}. We should also point out that
implicitly Theorem~\ref{theorem:budget-bound} assumes that each arm is pulled at
least once regardless of its bound. However, in our setting we can avoid this in
practice due to the correlation between arms.

One key thing to note is that the proof and derivation of $\beta$ given above
explicitly require the hardness quantity $H_\epsilon$, which is unknown in most
practical applications. Instead of requiring this quantity, our approach will be
to adaptively estimate it. Intuitively, the quantity $\beta$ controls how much
exploration BayesGap does (note that $\beta$ directly controls the width of the
uncertainty $s_k(t)$). Further, $\beta$ is inversely proportional to
$H_\epsilon$. As a result, in order to initially encourage more exploration we
will lower bound the hardness quantity. In particular, we can do this by upper
bounding each $\Delta_k$ by using conservative, posterior dependent upper and
lower bounds on $\mu_k$. In this work we use three posterior standard deviations
away from the posterior mean, i.e.\ $\hat\mu_k(t)\pm3\hat\sigma_{kt}$. (We
emphasize that these are not the same as $L_k(t)$ and $U_k(t)$.) Then the upper
bound on $\Delta_k$ is simply
\begin{equation*}
    \hat\Delta_k = \max_{j\neq k} (\hat\mu_j + 3\hat\sigma_j)
    - (\hat\mu_k - 3\hat\sigma_k).
\end{equation*}
From this point we can recompute $H_\epsilon$ and in turn recompute $\beta$
(step 7 in the pseudocode). For all experiments we will use this adaptive
method.

\textbf{Comparison with UGap.} The method in this section provides a Bayesian
version of the UGap algorithm which modifies the bounds used in this earlier
algorithm's arm selection step. By modifying step 6 of the BayesGap pseudo-code
to use either Hoeffding or Bernstein bounds we can re-obtain the UGap algorithm.
Note, however, that in doing so UGap assumes independent arms with bounded
rewards.

We can now roughly compare UGap's probability of error, i.e.\
$O(KT\exp(-\frac{T-K}{H_\epsilon}))$, with that of BayesGap,
$O(KT\exp(-\frac{T-K+\kappa\sigma^2/\eta^2}{H_\epsilon\sigma^2}))$. We can see
that with minor differences, these bounds are of the same order. First, we can
ignore the additional $\sigma^2$ term as this quantity is primarily due to the
distinction between bounded and Gaussian-distributed rewards. The $\eta^2$ term
corresponds to the concentration of the prior, and we can see that the more
concentrated the prior is (smaller $\eta$) the faster this rate is.  Note,
however, that the proof of BayesGap's simple regret relies on the true rewards
for each arm being within the support of the prior, so one cannot increase the
algorithm's performance by arbitrarily adjusting the prior. Finally, the
$\kappa$ term is related to the linear relationship between different arms.
Additional theoretical results on improving these bounds remains for future
work.

\section{Experiments}
\label{sec:experiments}

In the following subsections, we benchmark the proposed algorithm against a wide
variety of methods on two real-data applications. In
Section~\ref{sec:exp-traffic}, we revisit the traffic sensor network problem of
\cite{Srinivas:2010}. In Section~\ref{sec:exp-model}, we consider the problem of
automatic model selection and algorithm configuration.

\subsection{Application to a traffic sensor network}
\label{sec:exp-traffic}

In this experiment, we are given data taken from traffic speed sensors deployed
along highway I-880 South in California. Traffic speeds were collected at
$K=357$ sensor locations for all working days between 6AM and 11AM for an entire
month.  Our task is to identify the single location with the highest expected
speed, i.e.\ the least congested. This data was also used in the work of
\cite{Srinivas:2010}.

\begin{figure}
\centering
\includegraphics[width=0.48\textwidth]{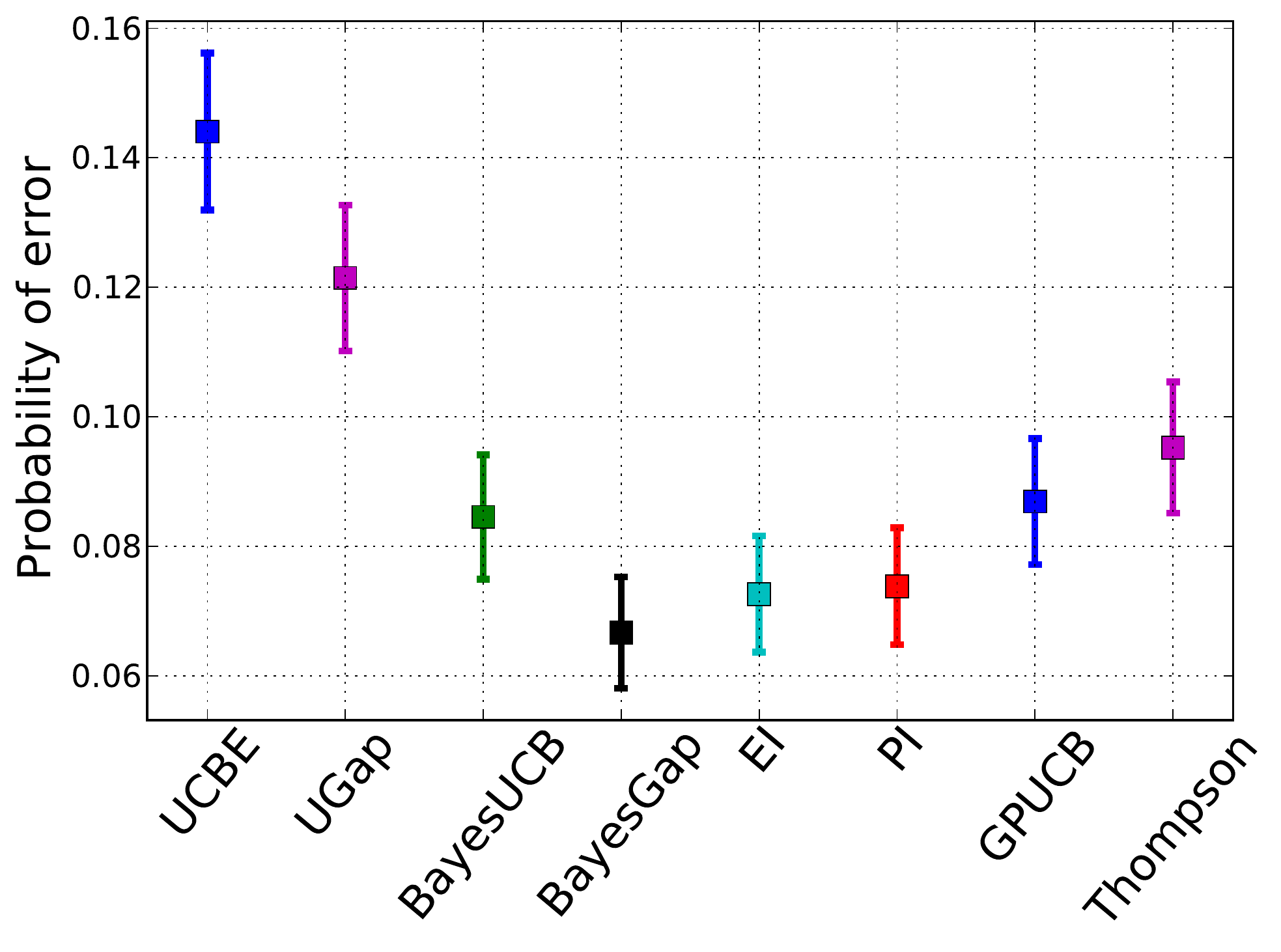}
\caption{\footnotesize Probability of error on the optimization domain of
traffic speed sensors. For this real data set, BayesGap provides considerable
improvements over the Bayesian cumulative regret alternatives and the
frequentist simple regret counterparts.}
\label{fig:traffic}
\end{figure}

Naturally, the readings from different sensors are correlated, however, this
correlation is not necessarily only due to geographical location. Therefore
specifying a similarity kernel over the space of traffic sensor locations alone
would be overly restrictive. Following the approach of \cite{Srinivas:2010}, we
construct the design matrix treating two-thirds of the available data as
historical and use the remaining third to evaluate the policies. In more detail,
The GP kernel matrix $G\in\R^{K\times K}$ is set to be empirical covariance
matrix of measurements for each of the $K$ sensor locations. As explained in
Section 4, the corresponding design matrix is $X = VD^{\frac12}$, where
$G=VDV^T$.

Following \cite{Srinivas:2010}, we estimate the noise level $\sigma$ of the
observation model using this data.  We consider the average empirical variance
of each individual sensor (i.e.\ the signal variance corresponding to the
diagonal of $G$) and set the noise variance $\sigma^2$ to 5\% of this value;
this corresponds to $\sigma^2=4.78$. We choose a broad prior with regularization
coefficient $\eta=20$.

In order to evaluate different bandit and Bayesian optimization algorithms, we
use each of the remaining 840 sensor signals (the aforementioned third of the
data) as the true mean vector $\mu$ for independent runs of the experiment. Note
that using the model in this way enables us to evaluate the ground truth for
each run (given by $\mu$, but not observed by the algorithm), and estimate the
actual probability that the policies return the best arm.

In this experiment, as well as in the next one, we estimate the hardness
parameter $H_\epsilon$ using the adaptive procedure outlined at the end of
Section 5.

We benchmark the proposed algorithm (BayesGap) against the following methods:

\noindent
\textbf{(1) {UCBE}:} Introduced by
    \cite{audibert-best}; this is a variant of the classical UCB policy of
    \cite{auer-ucb} that replaces the $\log(t)$ exploration term of UCB with a
    constant of order $\log(T)$ for known horizon $T$.

\noindent
\textbf{(2) {UGap}:} A gap-based exploration approach
    introduced by \cite{gabillon-unified}.

\noindent
\textbf{(3) {BayesUCB}} and \textbf{{GPUCB}:} Bayesian extensions of UCB which
    derive their confidence bounds from the posterior. Introduced by
    \cite{kaufmann-bayesucb} and \cite{Srinivas:2010} respectively.

\noindent
\textbf{(4) {Thompson sampling}:} A randomized, Bayesian index strategy
    wherein the $k$th arm is selected with probability given by a single-sample
    Monte Carlo approximation to the posterior probability that the arm is the
    maximizer \citep{chapelle-thompson,kaufmann-thompson,Agrawal:2013}.

\noindent
\textbf{(5) {Probability of Improvement (PI)}:} A classic Bayesian
    optimization method which selects points based on their probability of
    improving upon the current incumbent.

\noindent
\textbf{(6) {Expected Improvement (EI)}:} A Bayesian optimization, related to
    PI, which selects points based on the expected value of their
    improvement.

Note that techniques (1) and (2) above attack the problem of best arm
identification and use bounds which encourage more aggressive exploration.
However, they do not take correlation into account. On the other hand,
techniques such ad (3) are designed for cumulative regret, but model the
correlation among the arms. It might seem at first that we are comparing apples
and oranges. However, the purpose of comparing these methods, even if their
objectives are different, is to understand empirically what aspects of these
algorithms matter the most in practical applications.

The results, shown in Figure~\ref{fig:traffic}, are the probabilities of error
for each strategy, using a time horizon of $T=400$. (Here we used $\epsilon=0$,
but varying this quantity had little effect on the performance of each
algorithm.) By looking at the results, we quickly learn that techniques that
model correlation perform better than the techniques designed for best arm
identification, even when they are being evaluated in a best arm identification
task. The important conclusion is that one must always invest effort in
modelling the correlation among the arms.

The results also show that BayesGap does better than alternatives in this
domain. This is not surprising because BayesGap is the only competitor that
addresses budgets, best arm identification and correlation simultaneously.

\subsection{Automatic machine learning}
\label{sec:exp-model}

There exist many machine learning toolboxes, such as \texttt{Weka} and
\texttt{scikit-learn}. However, for a great many data practitioners interested
in finding the best technique for a predictive task, it is often hard to
understand what each technique in the toolbox does. Moreover, each technique can
have many free hyper-parameters that are not intuitive to most users.

Bayesian optimization techniques have already been proposed to automate machine
learning approaches, such as MCMC inference \citep{Mahendran:2012, Hamze:2013,
Wang:2013}, deep learning \citep{Bergstra:2011}, preference learning
\citep{Brochu:2007,Brochu:2010}, reinforcement learning and control
\citep{martinez-cantin:2007,Lizotte:2012}, and more \citep{snoek:2012b}. In
fact, methods to automate entire toolboxes (\texttt{Weka}) have appeared very
recently \citep{HutHooLey12-ParallelAC}, and go back to old proposals for
classifier selection \citep{Maron94Moore}.

Here, we will demonstrate BayesGap by automating regression with
\texttt{scikit-learn}. Our focus will be on minimizing the cost of
cross-validation in the domain of big data. In this setting, training and
testing each model can take a prohibitive long time. If we are working under a
finite budget, say if we only have three days before a conference deadline or
the deployment of a product, we cannot afford to try all models in all
cross-validation tests. However, it is possible to use techniques such as
BayesGap and Thompson sampling to find the best model with high probability. In
our setting, the action of ``pulling an arm'' will involve selecting a model,
splitting the dataset randomly into training and test sets, training the model,
and recording the test-set performance.

In this bandit domain, our arms will consist of five \texttt{scikit-learn}
techniques and associated parameters selected on a discrete grid. We consider
the following methods for regression:
\emph{Lasso (8 models)} with regularization parameters
    \texttt{alpha} = (0.0001, 0.0005, 0.001, 0.005, 0.01, 0.05, 0.1, 0.5),
\emph{Random Forests (64 models)} where we vary the number of trees,
    \text{\texttt{n\_estimators}}=(1,10,100,1000), the minimum number of
    training examples in a node to split
    \text{\texttt{min\_samples\_split}}=(1,3,5,7) and the minimum number of
    training examples in a leaf \text{\texttt{min\_samples\_leaf}}=(2,6,10,14),
\emph{linSVM (16 models)} where we vary the penalty parameter \text{\texttt{C}}=
    (0.001, 0.01, 0.1, 1) and the tolerance parameter
    \text{\texttt{epsilon}}=(0.0001, 0.001, 0.01, 0.1),
\emph{rbfSVM (64 models)} where we use the same grid as above for
    \texttt{C} and \texttt{epsilon}, and we add a third parameter which is the
    length scale $\gamma$ of the RBF kernel used by the SVM
    $\text{\texttt{gamma}}= (0.025, 0.05, 0.1, 0.2)$, and
\emph{K-nearest neighbors (8 models)} where we vary number of neighbors
    $\text{\texttt{n\_neighbors}}=(1,3,5,7,9,11,13,15).$ The total number of
    models is 160.
Within a class of regressors, we model correlation using a squared
exponential kernel with unit length scale, i.e., $k(x,x')=\text e^{-(x-x')^2}$. Using
this kernel, we compute a kernel matrix $G$ and construct the design matrix as before.

\begin{figure}[t!]
\centering
\includegraphics[width=0.36\textwidth]{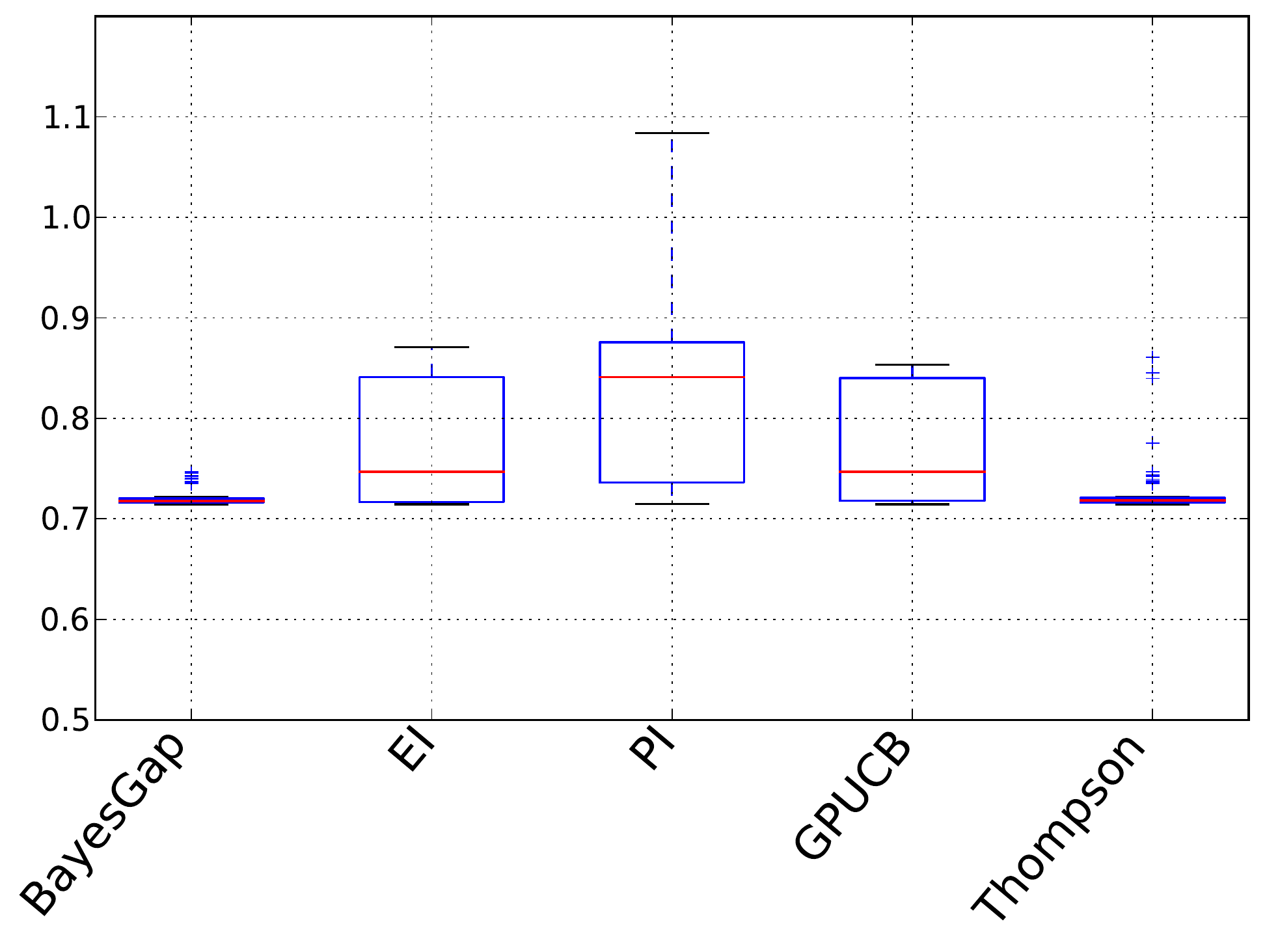}
\caption{\footnotesize Boxplot of RMSE over 100 runs with a fixed budget of
$T=10$.  EI, PI, and GPUCB get stuck in local minima. Note: lower is better.}
\label{fig:rmse-boxplot}
\end{figure}
\begin{figure}[t!]
\centering
\includegraphics[width=0.4\textwidth, trim=100 10 30 10, clip=true]{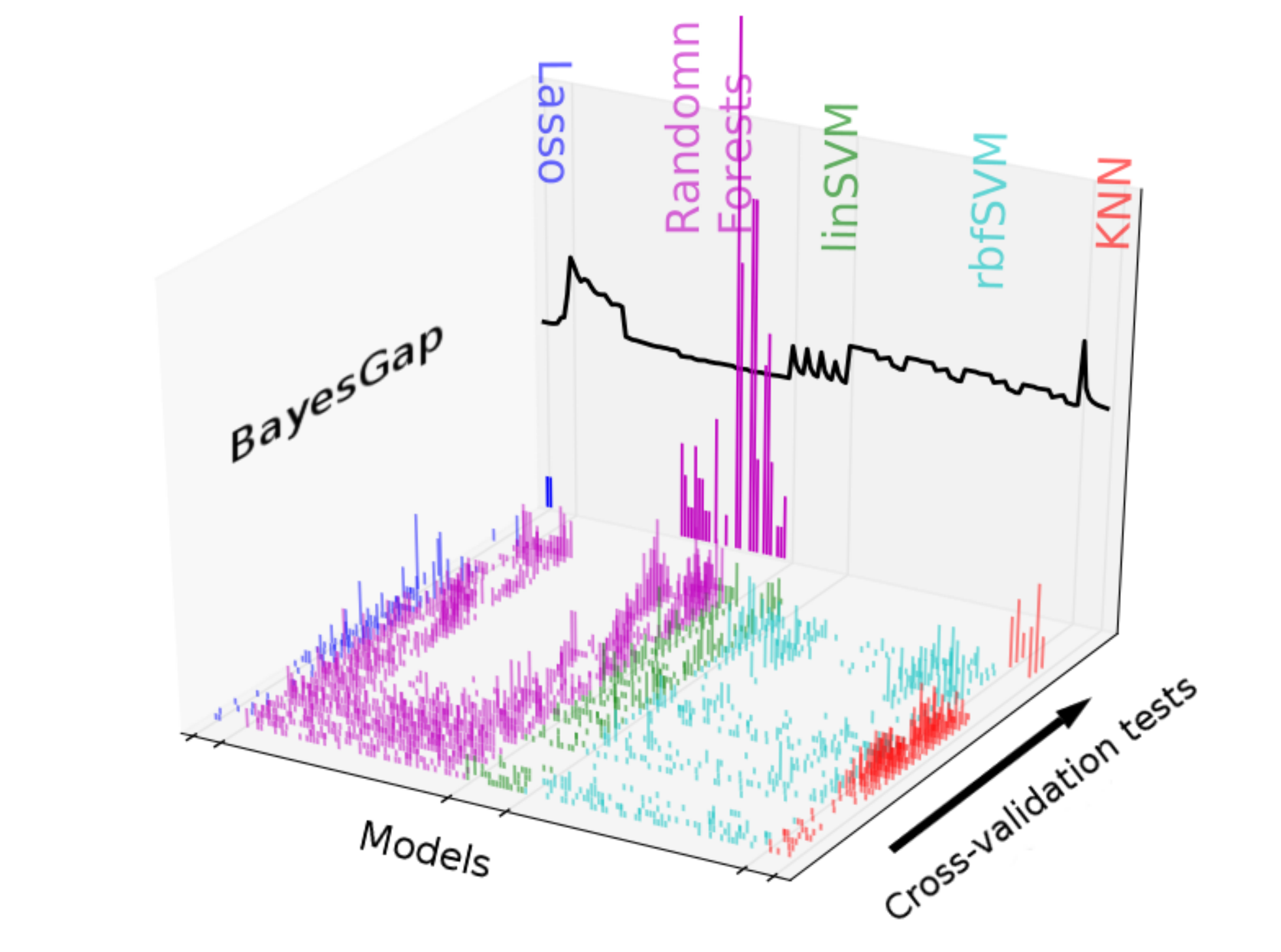}
\vspace{.5em}

\includegraphics[width=0.4\textwidth, trim=100 10 30 10, clip=true]{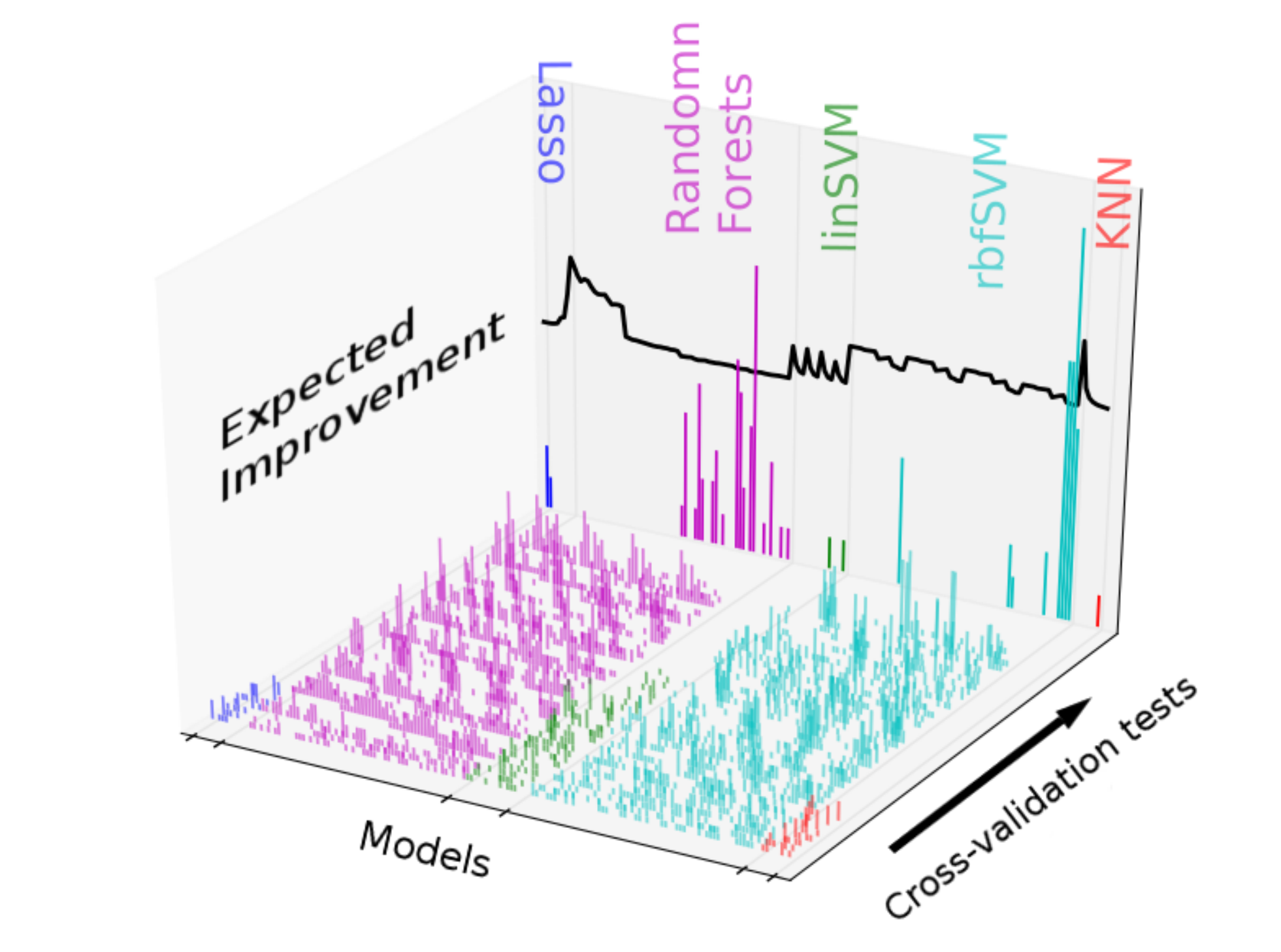}
\caption{\footnotesize Allocations and recommendations of BayesGap (top) and EI (bottom) over
100 runs at a budget of $T=40$ training and validation tests, and for 160 models
(i.e., more arms than possible observations). Histograms along the floor of the
plot show the arms pulled at each round while the histogram on the far wall
shows the final arm recommendation over 100 different runs. The solid black line
on the far wall shows the estimated ``ground truth'' RMSE for each model. Note
that EI quite often gets stuck in a locally optimal rbfSVM.}
\label{fig:arms-pulled}
\end{figure}

When an arm is pulled we select training and test sets that are each 10\% of the
size of the original, and ignore the remaining 80\% for this particular arm
pull. We then train the selected model on the training set, and test on the test
set. This specific form of cross-validation is similar to that of repeated
learning-testing \citep{arlot-cv, burman-rlt}.

We use the \texttt{wine} dataset from the UCI Machine Learning Repository, where
the task is to predict the quality score (between 0 and 10) of a wine given 11
attributes of its chemistry. We repeat the experiment 100 times.  We report, for
each method, an estimate of the RMSE for the recommended models on each run.
Unlike in the previous section, we do not have the ground truth generalization
error, and in this scenario it is difficult to estimate the actual ``probability
of error''.  Instead we report the RMSE, but remark that this is only a proxy
for the error rate that we are interested in.

The performance of the final recommendations for each strategy and a fixed
budget of $T=10$ tests is shown in Figure~\ref{fig:rmse-boxplot}. The results
for other budgets are almost identical. \emph{It must be emphasized that the
number of allowed function evaluations (10 tests) is much smaller than the
number of arms (160 models). Hence, frequentist approaches that require pulling
all arms, e.g. UGap, are inapplicable in this domain. }

The results indicate that Thompson and BayesGap are the best choices for this
 domain. Figure~\ref{fig:arms-pulled} shows the individual arms pulled and
 recommended by BayesGap (above) and EI (bottom), over each of the 100 runs, as
 well as an estimate of the ground truth RMSE for each individual model. EI and
 PI often get trapped in local minima. Due to the randomization inherent to
 Thompson sampling, it explores more, but in a more uniform manner (possibly
 explaining its poor results in the previous experiment).

\section{Conclusion}

We proposed a Bayesian optimization method for best arm identification with a
fixed budget. The method involves modelling of the correlation structure of the
arms via Gaussian process kernels. As a result of combining all these elements,
the proposed method outperformed techniques that do not model correlation or
that are designed for different objectives (typically cumulative regret). This
strategy opens up room for greater automation in practical domains with budget
constraints, such as the automatic machine learning application described in
this paper.

Although we focused on a Bayesian treatment of the UGap algorithm, the same
approach could conceivably be applied to other techniques such as UCBE. As
demonstrated by \cite{Srinivas:2010} and in this paper, it is possible to easily
show that the Bayesian bandits obtain similar bounds as the frequentist methods.
However, in our case, we conjecture that much stronger bounds should be possible
if we consider all the information brought in by the priors and measurement
models.

{\small
\bibliographystyle{abbrvnat}
\bibliography{bayesgap}
}

\appendix
\newpage
\section{Theorem~\ref{theorem:budget-bound}}

The proof of this section and the lemmas of the next section follow from the
proofs of \cite{gabillon-unified}. The modifications we have made to this proof
correspond to the introduction of the function $g_k$ which bounds the
uncertainty $s_k$ in order to make it simpler to introduce other models. We also
introduce a sufficient condition on this bound, i.e.\ that it is monotonically
decreasing in $N$ in order to bound the arm pulls with respect to $g_k^{-1}$.
Ultimately, this form of the theorem reduces the problem of of proving a regret
bound to that of checking a few properties of the uncertainty model.

\begin{theorem}
    \label{theorem:budget-bound}
    Consider a bandit problem with horizon $T$ and $K$ arms. Let $U_k(t)$ and
    $L_k(t)$ be upper and lower bounds that hold for all times $t\leq T$ and all
    arms $k\leq K$ with probability $1-\delta$. Finally, let $g_k$ be a
    monotonically decreasing function such that $s_k(t)\leq g_k(N_k(t-1))$ and
    $\sum_k g_k^{-1}(H_{k\epsilon})\leq T-K$. We can then bound the simple
    regret as
       \begin{equation}
        \Pr(R_{\Omega_T} \leq \epsilon) \geq 1 - KT\delta.
    \end{equation}
\end{theorem}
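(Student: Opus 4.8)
The plan is to follow the template of \cite{gabillon-unified}: condition on a high-probability ``clean'' event on which all confidence intervals are valid, bound the simple regret of the recommendation by the smallest gap $B_{J(t)}(t)$ seen over the horizon, and then argue by contradiction that this smallest gap must drop below $\epsilon$ --- since otherwise the arm-selection rule, together with the monotone envelope $g_k$, would force more than $T$ pulls.

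First I would fix the clean event $\xi$ that $L_k(t)\le\mu_k\le U_k(t)$ for every arm $k\le K$ and every round $t\le T$. Since each two-sided bound fails with probability at most $\delta$, a union bound over the $KT$ pairs $(k,t)$ gives $\Pr(\xi)\ge 1-KT\delta$, so it suffices to prove $R_{\Omega_T}\le\epsilon$ deterministically on $\xi$. On $\xi$ the gap dominates the regret: for any $k$ and $t$, if $k$ is suboptimal then the optimal arm lies among $i\ne k$, so $\mu^*\le\max_{i\ne k}U_i(t)$ and $\mu_k\ge L_k(t)$, whence $R_k=\mu^*-\mu_k\le B_k(t)$ (and $R_k=0$ otherwise); this is Lemma~\ref{lemma:arm-regret-bound}. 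Writing $t^\dagger=\argmin_{t\le T}B_{J(t)}(t)$, the recommendation is exactly $\Omega_T=J(t^\dagger)$, so $R_{\Omega_T}\le B_{J(t^\dagger)}(t^\dagger)=\min_{t\le T}B_{J(t)}(t)$. (Recommending this time-wise argmin, rather than $J(T)$, is essential precisely because the next step only guarantees a small gap at \emph{some} round, not necessarily the last --- this is inequality (b).) It remains to show $\min_{t\le T}B_{J(t)}(t)\le\epsilon$ on $\xi$.

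For this, suppose $B_{J(t)}(t)>\epsilon$ at every round. The technical core, imported essentially verbatim from \cite{gabillon-unified}, is the claim that on $\xi$ a round with $B_{J(t)}(t)>\epsilon$ leaves the pulled arm genuinely uncertain: $s_{a_t}(t)>H_{a_t\epsilon}=\max(\tfrac12(\Delta_{a_t}+\epsilon),\epsilon)$. One proves this by expanding $B_{J(t)}(t)=U_{j(t)}(t)-L_{J(t)}(t)$, using $s_{a_t}(t)=\max(s_{J(t)}(t),s_{j(t)}(t))$ from the selection rule, and doing a short case analysis on whether $a_t=J(t)$ or $a_t=j(t)$ and on whether $J(t)$ is the true optimum; on $\xi$ the intervals are valid, which converts the assumed gap into the stated lower bound on $s_{a_t}(t)$. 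Granting this, since $s_{a_t}(t)\le g_{a_t}(N_{a_t}(t-1))$ and $g_{a_t}$ is monotonically decreasing, $g_{a_t}(N_{a_t}(t-1))\ge s_{a_t}(t)>H_{a_t\epsilon}$ forces $N_{a_t}(t-1)<g_{a_t}^{-1}(H_{a_t\epsilon})$. Hence, after the $K$ mandatory initial pulls, arm $k$ is pulled strictly fewer than $g_k^{-1}(H_{k\epsilon})$ additional times, so the total number of post-initialization pulls is strictly less than $\sum_k g_k^{-1}(H_{k\epsilon})\le T-K$ --- contradicting that there are exactly $T-K$ of them. Therefore some round has $B_{J(t)}(t)\le\epsilon$, so $R_{\Omega_T}\le\epsilon$ on $\xi$, and $\Pr(R_{\Omega_T}\le\epsilon)\ge\Pr(\xi)\ge 1-KT\delta$.

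The only genuinely nontrivial ingredient is the core claim of the last paragraph --- that on $\xi$, $B_{J(t)}(t)>\epsilon$ implies $s_{a_t}(t)>H_{a_t\epsilon}$ --- which is where all the structure of the $J(t)/j(t)/a_t$ construction is actually used; everything else (the union bound, the regret-to-gap reduction, and counting the pulls against the budget) is bookkeeping. The payoff of stating the theorem with a generic monotone envelope $g_k$ and a generic interval-failure probability $\delta$ is that, once this claim is in hand, the bound holds uniformly, and the Gaussian instance is recovered simply by substituting the explicit $g_k$ from the Sherman--Morrison computation and the $\delta$ furnished by Lemma~\ref{lemma:gaussian-deviation}, as done in the proof of Theorem~\ref{cor:gaussian}.
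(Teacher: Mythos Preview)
Your proposal is correct and follows essentially the same route as the paper's proof: define the clean event and union-bound it, reduce regret to the gap via Lemma~\ref{lemma:arm-regret-bound}, invoke the recommendation rule to pass to $\min_t B_{J(t)}(t)$ (your inequality (b)), and then argue by contradiction using the monotone envelope $g_k$ to count pulls. The paper packages your ``core claim'' slightly differently---it states the intermediate bound $\min(0,s_k(t)-\Delta_k)+s_k(t)\ge B_{J(t)}(t)$ as a separate Lemma~\ref{lemma:upper-bound-B} and applies the whole chain only at the \emph{last} pull time $t_k$ of each arm rather than at every round---but the case analysis, the resulting inequality $s_k(t_k)>H_{k\epsilon}$, and the final pull-count contradiction $T<\sum_k g_k^{-1}(H_{k\epsilon})+K\le T$ are identical to what you outline.
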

\begin{proof}
We will first define the event $\E$ such that on this event every mean
is bounded by its associated bounds for all times $t$. More precisely
we can write this as
\begin{equation*}
    \E = \{
    \forall k\leq K, \forall t\leq T,
    L_k(t) \leq \mu_k \leq U_k(t)\}.
\end{equation*}
By definition, these bounds are given such that the probability of
deviating from a single bound is $\delta$. Using a union bound we can
then bound the probability of remaining within all bounds as
$\Pr(\E)\geq 1-KT\delta$.

We will next condition on the event $\E$ and assume regret of the form
$R_{\Omega_T}>\epsilon$ in order to reach a contradiction. Upon
reaching said contradiction we can then see that the simple regret must
be bounded by $\epsilon$ with probability given by the probability of
event $\E$, as stated above. As a result we need only show that a
contradiction occurs.

We will now define $\tau=\argmin_{t\leq T} B_{J(t)}(t)$ as the time at
which the recommended arm attains the minimum bound, i.e.\
$\Omega_T=J(\tau)$ as defined in \eqref{eqn:budget-omega}. Let $t_k\leq
T$ be the last time at which arm $k$ is pulled. Note that each arm must
be pulled at least once due to the initialization phase. We can then
show the following sequence of inequalities:
\begin{align*}
    \min(0, s_k(t_k)-\Delta_k) + s_k(t_k)
    &\geq B_{J(t_k)}(t_k) \tag{a} \\
    &\geq B_{\Omega_T}(\tau) \tag{b} \\
    &\geq R_{\Omega_T} \tag{c} \\
    &> \epsilon. \tag{d}
\end{align*}
Of these inequalities, (a) holds by Lemma~\ref{lemma:upper-bound-B}, (c)
holds by Lemma~\ref{lemma:arm-regret-bound}, and (d) holds by our
assumption on the simple regret. The inequality (b) holds due to the
definition $\Omega_T$ and time $\tau$. Note, that we can also write the
preceding inequality as two cases
\begin{align*}
    s_k(t_k) > 2s_k(t_k)-\Delta_k > \epsilon,
    &\quad\text{if } \Delta_k > s_k(t_k); \\
    2s_k(t_k)-\Delta_k \geq s_k(t_k) > \epsilon,
    &\quad\text{if } \Delta_k \leq s_k(t_k)
\end{align*}
This leads to the following bound on the confidence diameter,
\begin{equation*}
    s_k(t_k)
    > \max(\tfrac12(\Delta_k+\epsilon),\epsilon)
    = H_{k\epsilon}
\end{equation*}
which can be obtained by a simple manipulation of the above equations.
More precisely we can notice that in each case, $s_k(t_k)$
upper bounds both $\epsilon$ and $\tfrac12(\Delta_k+\epsilon)$, and thus
it obviously bounds their maximum.

Now, for any arm $k$ we can consider the final number of arm pulls,
which we can write as
\begin{align*}
    N_k(T)
    = N_k(t_k-1) + 1
    &\leq g^{-1}(s_k(t_k)) + 1 \\
    &< g^{-1}(H_{k\epsilon}) + 1.
\end{align*}
This holds due to the definition of $g$ as a monotonic decreasing
function, and the fact that we pull each arm at least once during the
initialization stage. Finally, by summing both sides with respect to $k$
we can see that $\sum_k g^{-1}(H_{k\epsilon}) + K > T$, which
contradicts our definition of $g$ in the Theorem statement.
\end{proof}

\section{Lemmas}

\setcounter{lemma}{0}
\renewcommand{\thelemma}{\Alph{section}\arabic{lemma}}
\renewcommand{\thecorollary}{\Alph{section}\arabic{lemma}}

In order to simplify notation in this section, we will first introduce
$B(t)=\min_k B_k(t)$ as the minimizer over all gap indices for any time
$t$. We will also note that this term can be rewritten as
\begin{equation*}
    B(t) = B_{J(t)}(t) = U_{j(t)}(t) - L_{J(t)}(t),
\end{equation*}
which holds due to the definitions of $j(t)$ and $J(t)$.

\begin{lemma}
    \label{lemma:arm-regret-bound}
    For any sub-optimal arm $k\neq k^*$, any time $t\in\{1,\dots,T\}$,
    and on event $\E$, the immediate regret of pulling that arm is upper
    bounded by the index quantity, i.e. $B_k(t)\geq R_k$.
\end{lemma}
\begin{proof}
    We can start from the definition of the bound and expand this term
    as
    \begin{align*}
        B_k(t)
        &= \max_{i\neq k} U_i(t) - L_k(t) \\
        &
        \geq \max_{i\neq k} \mu_i - \mu_k
        = \mu^* - \mu_k = R_k.
    \end{align*}
    The first inequality holds due to the assumption of event $\E$,
    whereas the following equality holds since we are only considering
    sub-optimal arms, for which the best alternative arm is obviously
    the optimal arm.
\end{proof}

\begin{lemma}
    \label{lemma:bound-order}
    For any time $t$ let $k=a_t$ be the arm pulled, for which the
    following statements hold:
    \begin{align*}
        &
        \textrm{if } k=j(t), \textrm{ then }
        L_{j(t)}(t) \leq L_{J(t)}(t), \\
        &
        \textrm{if } k=J(t), \textrm{ then }
        U_{j(t)}(t) \leq U_{J(t)}(t).
    \end{align*}
\end{lemma}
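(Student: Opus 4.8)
The plan is to prove both implications by a short, purely algebraic argument that does not even invoke the event $\E$: it uses only (i) the optimality of $J(t)$ as the minimizer of $B_k(t)$ and (ii) the rule determining which of $j(t)$, $J(t)$ is pulled. For brevity write $U_i, L_i, s_i, B_i$ in place of $U_i(t), L_i(t), s_i(t), B_i(t)$.

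First I would distill a single inequality out of $B_{J(t)} = \min_k B_k \le B_{j(t)}$. Using the identity $B_{J(t)} = U_{j(t)} - L_{J(t)}$ recorded at the start of this section, together with $B_{j(t)} = \max_{i\ne j(t)} U_i - L_{j(t)}$, optimality of $J(t)$ gives $U_{j(t)} - L_{J(t)} \le \max_{i\ne j(t)} U_i - L_{j(t)}$. Because $j(t)$ maximizes $U_i$ over $i\ne J(t)$, every arm other than $j(t)$ and $J(t)$ satisfies $U_i \le U_{j(t)}$, so removing $j(t)$ from the maximum can only leave $J(t)$ as a possible ``winner'': $\max_{i\ne j(t)} U_i \le \max\{U_{J(t)},U_{j(t)}\}$. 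Substituting and rearranging yields the key inequality
\begin{equation*}
  L_{j(t)} - L_{J(t)} \;\le\; \max\{\, U_{J(t)} - U_{j(t)},\; 0 \,\}. \tag{$\star$}
\end{equation*}

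Then I would feed in the pull rule $a_t = \argmax_{k\in\{j(t),J(t)\}} s_k$, with $s_k = U_k - L_k$. If $a_t = j(t)$ then $U_{j(t)} - L_{j(t)} \ge U_{J(t)} - L_{J(t)}$, i.e.\ $U_{J(t)} - U_{j(t)} \le L_{J(t)} - L_{j(t)}$; assuming toward a contradiction that $L_{j(t)} > L_{J(t)}$, this forces $U_{J(t)} - U_{j(t)} < 0$, so the right-hand side of $(\star)$ equals $0$ and $(\star)$ gives $L_{j(t)}\le L_{J(t)}$ --- a contradiction, proving the first statement. If instead $a_t = J(t)$ then $U_{j(t)} - U_{J(t)} \le L_{j(t)} - L_{J(t)}$; assuming $U_{j(t)} > U_{J(t)}$, the right-hand side of $(\star)$ is again $0$, so $L_{j(t)} \le L_{J(t)}$, whence $U_{j(t)} - U_{J(t)} \le L_{j(t)} - L_{J(t)} \le 0$ --- contradicting $U_{j(t)} > U_{J(t)}$ --- proving the second statement.

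The only step requiring care is deriving $(\star)$: one must notice that dropping $j(t)$ from $\max_{i\ne j(t)}U_i$ leaves $J(t)$ as the unique candidate that can exceed $U_{j(t)}$, and then keep the two sign regimes of $U_{J(t)} - U_{j(t)}$ straight when turning the resulting inequality into the compact $\max\{\cdot,0\}$ form. After that, each half of the lemma is a one-line contradiction via the pull rule, and I do not anticipate any genuine obstacle beyond this bookkeeping.
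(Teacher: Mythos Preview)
Your proof is correct and follows essentially the same approach as the paper: both argue by contradiction from the minimality of $B_{J(t)}$ together with the pull rule $s_{a_t}\ge s_{\text{other}}$, using that $j(t)$ maximizes $U_i$ over $i\ne J(t)$. The only difference is organizational --- you extract the single inequality $(\star)$ up front and reuse it in both cases, whereas the paper runs the contradiction directly (deducing $B_{j(t)}<B_{J(t)}$) in Case~1 and then says Case~2 is analogous.
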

\begin{proof}
    We can divide this proof into two cases based on which of the two
    arms is selected.

    \textbf{Case 1:} let $k=j(t)$ be the arm selected. We will then
    assume that $L_{j(t)}(t)>L_{J(t)}(t)$ and show that this is a
    contradiction. By definition of the arm selection rule we know that
    $s_{j(t)}(t)\geq s_{J(t)}(t)$, from which we can easily deduce that
    $U_{j(t)}(t)>U_{J(t)}(t)$ by way of our first assumption. As a
    result we can see that
    \begin{align*}
        B_{j(t)}(t)
        &= \max_{j\neq j(t)} U_j(t) - L_{j(t)}(t) \\
        &< \max_{j\neq J(t)} U_j(t) - L_{J(t)}(t)
         = B_{J(t)}(t).
    \end{align*}
    This inequality holds due to the fact that arm $j(t)$ must
    necessarily have the highest upper bound over all arms. However,
    this contradicts the definition of $J(t)$ and as a result it must
    hold that $L_{j(t)}(t)\leq L_{J(t)}(t)$.

    \textbf{Case 2:} let $k=J(t)$ be the arm selected. The proof follows
    the same format as that used for $k=j(t)$.
\end{proof}

\begin{corollary}
    \label{cor:uncertainty}
    If arm $k=a_t$ is pulled at time $t$, then the minimum index is
    bounded above by the uncertainty of arm $k$, or more precisely
    \begin{equation*}
        B(t) \leq s_k(t).
    \end{equation*}
\end{corollary}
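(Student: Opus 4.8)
The plan is to do a direct case split on which of the two candidate arms is actually pulled, using the rewriting $B(t) = B_{J(t)}(t) = U_{j(t)}(t) - L_{J(t)}(t)$ recorded at the start of this section together with the matching conclusion of Lemma~\ref{lemma:bound-order}. Recall also that by definition the confidence diameter of any arm $i$ is $s_i(t) = U_i(t) - L_i(t)$, and that the selection rule $a_t = \argmax_{i\in\{j(t),J(t)\}} s_i(t)$ always returns one of $j(t)$ or $J(t)$, so the two cases below are exhaustive.

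In the first case, $k = j(t)$. Lemma~\ref{lemma:bound-order} then gives $L_{j(t)}(t) \leq L_{J(t)}(t)$, so replacing $L_{J(t)}(t)$ by the smaller quantity $L_{j(t)}(t)$ in the identity for $B(t)$ only increases the right-hand side: $B(t) = U_{j(t)}(t) - L_{J(t)}(t) \leq U_{j(t)}(t) - L_{j(t)}(t) = s_{j(t)}(t) = s_k(t)$. In the second case, $k = J(t)$, and Lemma~\ref{lemma:bound-order} symmetrically gives $U_{j(t)}(t) \leq U_{J(t)}(t)$; replacing $U_{j(t)}(t)$ by the larger quantity $U_{J(t)}(t)$ again only increases the right-hand side, so $B(t) = U_{j(t)}(t) - L_{J(t)}(t) \leq U_{J(t)}(t) - L_{J(t)}(t) = s_{J(t)}(t) = s_k(t)$. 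Either way $B(t) \leq s_k(t)$, which is the claim.

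The only point requiring any care is lining up the hypotheses of Lemma~\ref{lemma:bound-order}: it is phrased precisely for ``the arm pulled'', so its two conclusions attach to exactly the two cases of our split, and no appeal to the event $\E$ or to monotonicity of $g_k$ enters here. I do not expect a genuine obstacle: once the identity $B(t) = U_{j(t)}(t) - L_{J(t)}(t)$ and Lemma~\ref{lemma:bound-order} are in hand, the corollary is a two-line bookkeeping consequence, and the substantive work has already been done in proving that lemma.
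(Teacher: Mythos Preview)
Your proof is correct and follows essentially the same approach as the paper's own proof: a case split on whether $k=j(t)$ or $k=J(t)$, invoking the corresponding conclusion of Lemma~\ref{lemma:bound-order} to replace the mismatched bound in the identity $B(t)=U_{j(t)}(t)-L_{J(t)}(t)$ and collapse it to $s_k(t)$. If anything, your write-up is slightly more explicit than the paper's, which only spells out the $k=j(t)$ case and then states that the $k=J(t)$ case is analogous.
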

\begin{proof}
    We know that $k$ must be restricted to the set $\{j(t),J(t)\}$ by
    definition. We can then consider the case that $k=j(t)$, and by
    Lemma~\ref{lemma:bound-order} we know that this imposes an order on
    the lower bounds of each possible arm, allowing us to write
    \begin{align*}
        B(t)
        \leq
        U_{j(t)}(t) - L_{j(t)}(t) = s_{j(t)}(t)
    \end{align*}
    from which our corollary holds. We can then easily see that a
    similar argument holds for $k=J(t)$ by ordering the upper bounds,
    again via Lemma~\ref{lemma:bound-order}.
\end{proof}

\begin{lemma}
    \label{lemma:upper-bound-B}
    On event $\E$, for any time $t\in\{1,\dots,T\}$, and for arm $k=a_t$
    the following bound holds on the minimal gap,
    \begin{equation*}
        B(t) \leq \min(0, s_k(t) - \Delta_k) + s_k(t).
    \end{equation*}
\end{lemma}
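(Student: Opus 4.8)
The plan is to split the claimed bound into two pieces. Since $\min(0, s_k(t)-\Delta_k) + s_k(t) = \min\big(s_k(t),\, 2s_k(t)-\Delta_k\big)$, it suffices to prove separately that (i) $B(t) \le s_k(t)$ and (ii) $B(t) \le 2s_k(t) - \Delta_k$. Item (i) is exactly Corollary~\ref{cor:uncertainty}, so all the work is in (ii). I will use repeatedly the elementary facts that on $\E$ one has $L_i(t) \le \mu_i \le U_i(t)$, hence $U_i(t) \le \mu_i + s_i(t)$ and $L_i(t) \ge \mu_i - s_i(t)$; that the definition $j(t) = \argmax_{i\neq J(t)} U_i(t)$ gives $U_{j(t)}(t) \ge U_i(t) \ge \mu_i$ for every $i \neq J(t)$; and that $a_t$ maximizes $s$ over $\{j(t),J(t)\}$, so $s_{j(t)}(t) + s_{J(t)}(t) \le 2 s_k(t)$. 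Throughout, recall $B(t) = U_{j(t)}(t) - L_{J(t)}(t)$.

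I would then case on whether the proposal arm $J(t)$ is the optimal arm $k^*$. If $J(t) = k^*$, I expand $B(t) = U_{j(t)}(t) - L_{k^*}(t)$ and apply the $\E$-inequalities $U_{j(t)}(t) \le \mu_{j(t)} + s_{j(t)}(t)$ and $L_{k^*}(t) \ge \mu^* - s_{k^*}(t)$, together with $s_{j(t)}(t)+s_{k^*}(t) \le 2s_k(t)$, to get $B(t) \le (\mu_{j(t)} - \mu^*) + 2s_k(t)$. Because $j(t) \neq J(t) = k^*$, the surplus $\mu^* - \mu_{j(t)}$ dominates $\Delta_k$: if $k = a_t = j(t)$ then $\Delta_k = \mu^* - \mu_{j(t)}$; if $k = a_t = J(t) = k^*$ then $\Delta_k = \mu^* - \max_{i\neq k^*}\mu_i \le \mu^* - \mu_{j(t)}$. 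In either case $B(t) \le 2s_k(t) - \Delta_k$, which is (ii).

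If instead $J(t) \neq k^*$, the direct estimate on $B(t)$ is too weak, so I would argue that in this regime $\Delta_k$ is itself at most $s_k(t)$, whereupon (ii) follows from (i) since $2s_k(t) - \Delta_k \ge s_k(t) \ge B(t)$. The leverage is that $k^* \neq J(t)$ places $k^*$ among the candidates defining $j(t)$, so $U_{j(t)}(t) \ge U_{k^*}(t) \ge \mu^*$ on $\E$. A short sub-analysis on which of $\{j(t),J(t)\}$ is pulled then finishes: if $a_t = J(t)$, Corollary~\ref{cor:uncertainty} and the bounds above give $s_k(t) \ge B(t) \ge \mu^* - \mu_{J(t)} = \Delta_{J(t)}$; if $a_t = j(t) \neq k^*$, combining $U_{j(t)}(t) \ge \mu^*$ with $U_{j(t)}(t) \le \mu_{j(t)} + s_{j(t)}(t)$ gives $\Delta_k = \mu^*-\mu_{j(t)} \le s_k(t)$; and if $a_t = j(t) = k^*$ while $J(t) \neq k^*$, then $\Delta_k = \mu^* - \max_{i\neq k^*}\mu_i \le \mu^* - \mu_{J(t)} \le B(t) \le s_k(t)$.

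The step I expect to be the crux is this last case: the naive chain of confidence-interval inequalities leaves an uncontrolled term $\mu_{j(t)} - \mu_{J(t)}$ when $J(t)$ is suboptimal, and the resolution is to notice that the refinement $2s_k(t) - \Delta_k$ is only binding when $\Delta_k > s_k(t)$, which cannot occur once $J(t) \neq k^*$ --- a fact that ultimately rests on the slightly indirect observation that $U_{j(t)}(t)$ upper-bounds $\mu^*$ in that situation. The remaining bookkeeping (tracking where $k^*$ sits relative to $\{j(t),J(t)\}$) is routine, with each sub-case collapsing to a one-line computation or to Corollary~\ref{cor:uncertainty}.
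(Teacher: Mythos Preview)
Your proof is correct and follows essentially the same approach as the paper's: a case analysis on the position of $k^*$ relative to $\{j(t),J(t)\}$ and on which of the two is pulled, using Corollary~\ref{cor:uncertainty} together with the observation that $U_{j(t)}(t)\ge\mu^*$ whenever $J(t)\neq k^*$. Your organization (pivoting first on $J(t)=k^*$ versus $J(t)\neq k^*$, and using $s_{j(t)}+s_{J(t)}\le 2s_k$) packages the paper's six cases a bit more compactly, but the underlying inequalities are identical.
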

\begin{proof}
    In order to prove this lemma we will consider a number of cases
    based on which of $k\in\{j(t), J(t)\}$ is selected and whether or
    not one or neither of these arms corresponds to the optimal arm
    $k^*$. Ultimately, this results in six cases, the first three of
    which we will present are based on selecting arm $k=j(t)$.

    \textbf{Case 1:} consider $k^*=k=j(t)$. We can then see that the
    following sequence of inequalities holds,
    \begin{equation*}
        \mu_{(2)}
        \stackrel{(a)}\geq \mu_{J(t)}(t)
        \stackrel{(b)}\geq L_{J(t)}(t)
        \stackrel{(c)}\geq L_{j(t)}(t)
        \stackrel{(d)}\geq \mu_k - s_k(t).
    \end{equation*}
    Here (b) and (d) follow directly from event $\E$ and (c) follows
    from Lemma~\ref{lemma:bound-order}. Inequality (a) follows trivially
    from our assumption that $k=k^*$, as a result $J(t)$ can only be as
    good as the 2nd-best arm. Using the definition of $\Delta_k$ and the
    fact that $k=k^*$, the above inequality yields
    \begin{equation*}
        s_k(t) - (\mu_k - \mu_{(2)})
        = s_k(t) - \Delta_k \geq 0
    \end{equation*}
    Therefore the $\min$ in the result of
    Lemma~\ref{lemma:upper-bound-B} vanishes
    and the result follows from Corollary~\ref{cor:uncertainty}.

    \textbf{Case 2:} consider $k=j(t)$ and $k^*=J(t)$. We can then write
    \begin{align*}
        B(t)
        &= U_{j(t)}(t) - L_{J(t)}(t) \\
        &\leq \mu_{j(t)}(t) + s_{j(t)}(t) - \mu_{J(t)}(t) + s_{J(t)}(t) \\
        &\leq \mu_k - \mu^* + 2s_k(t)
        \intertext{where the first inequality holds from event $\E$,
        and the second holds because by definition the selected arm
        must have higher uncertainty. We can then simplify this as}
        &= 2s_k(t) - \Delta_k \\
        &\leq \min(0, s_k(t) - \Delta_k) + s_k(t),
    \end{align*}
    where the last step evokes Corollary~\ref{cor:uncertainty}.

    \textbf{Case 3:} consider $k=j(t)\neq k^*$ and $J(t)\neq k^*$. We
    can then write the following sequence of inequalities,
    \begin{equation*}
        \mu_{j(t)}(t) + s_{j(t)}(t)
        \stackrel{(a)}\geq U_{j(t)}(t)
        \stackrel{(b)}\geq U_{k^*}(t)
        \stackrel{(c)}\geq \mu^*.
    \end{equation*}
    Here (a) and (c) hold due to event $\E$ and (b) holds since by
    definition $j(t)$ has the highest upper bound other than $J(t)$,
    which in turn is not the optimal arm by assumption in this case. By
    simplifying this expression we obtain $s_k(t)-\Delta_k\geq 0$, and
    hence the result follows from Corollary~\ref{cor:uncertainty} as in
    Case 1.

    \textbf{Cases 4--6:} consider $k=J(t)$. The proofs for these three
    cases follow the same general form as the above cases and is
    omitted. Cases 1 through 6 cover all possible scenarios and prove
    Lemma~\ref{lemma:upper-bound-B}.
\end{proof}

\begin{lemma}
    \label{lemma:gaussian-deviation}
    Consider a normally distributed random variable $X\sim\mathcal
    N(\mu,\sigma^2)$ and $\beta\geq 0$. The probability that $X$ is
    within a radius of $\beta\sigma$ from its mean can then be
    written as
    \begin{equation*}
        \Pr\big(|X - \mu| \leq \beta\sigma\big)
        \geq 1-e^{-\beta^2/2}.
    \end{equation*}
\end{lemma}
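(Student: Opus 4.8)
The plan is to reduce to a standard normal and then bound the two‑sided tail integral directly, being careful to get the constant right. First I would set $Z = (X-\mu)/\sigma$, which is $\Norm(0,1)$, so that the claim is equivalent to $\Pr(|Z|>\beta)\leq e^{-\beta^2/2}$. Writing the tail explicitly,
\begin{equation*}
    \Pr(|Z|>\beta) = \frac{2}{\sqrt{2\pi}}\int_\beta^\infty e^{-t^2/2}\,dt,
\end{equation*}
the whole argument then reduces to bounding this integral.

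The key step is the substitution $t = \beta + s$, i.e.\ completing the square: $(\beta+s)^2/2 = \beta^2/2 + \beta s + s^2/2$, so that $\int_\beta^\infty e^{-t^2/2}\,dt = e^{-\beta^2/2}\int_0^\infty e^{-\beta s - s^2/2}\,ds$. Since $\beta\geq 0$ and $s\geq 0$ we have $e^{-\beta s}\leq 1$, hence the remaining integral is at most $\int_0^\infty e^{-s^2/2}\,ds = \sqrt{\pi/2}$. Multiplying by the normalizing factor $2/\sqrt{2\pi}$ gives exactly $1$ in front of $e^{-\beta^2/2}$, so $\Pr(|Z|>\beta)\leq e^{-\beta^2/2}$, and taking complements yields the stated bound. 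Note this also uses that the integral of the standard Gaussian density over the half‑line $[0,\infty)$ contributes only half of $\sqrt{2\pi}$.

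The main (minor) obstacle is precisely getting that constant: the textbook Chernoff argument $\Pr(Z>\beta)\leq \inf_{\lambda\geq 0} e^{-\lambda\beta + \lambda^2/2} = e^{-\beta^2/2}$ combined with a union bound over the two tails would only give $\Pr(|Z|>\beta)\leq 2e^{-\beta^2/2}$, which is weaker than what the lemma claims. The completing‑the‑square substitution above is what buys back the factor of two, with the cross term $e^{-\beta s}\leq 1$ being harmless. An alternative route, if one prefers to avoid the substitution, is to show that $h(\beta) := \tfrac12 e^{-\beta^2/2} - \Pr(Z>\beta) \geq 0$ for all $\beta\geq 0$ by checking $h(0)=0$, $h(\beta)\to 0$ as $\beta\to\infty$, and that $h'$ changes sign exactly once (from positive to negative) at $\beta = \sqrt{2/\pi}$; but the direct integral bound is shorter and entirely self‑contained.
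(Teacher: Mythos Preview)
Your proof is correct and is essentially the same argument as the paper's: both reduce to a standard normal, factor $e^{-\beta^2/2}$ out of the one-sided tail integral via the identity $t^2/2 = \beta^2/2 + \beta(t-\beta) + (t-\beta)^2/2$, drop the cross term using $e^{-\beta(t-\beta)}\leq 1$, and recognize the remaining half-Gaussian integral. The only cosmetic difference is that the paper bounds one tail by $\tfrac12 e^{-\beta^2/2}$ and then doubles via a union bound, whereas you invoke symmetry at the outset; the computations are otherwise identical.
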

\begin{proof}
    Consider $Z\sim\Norm(0,1)$. The probability that $Z$ exceeds some positive
    bound $c>0$ can be written
    \begin{align*}
        \Pr(Z>c)
        &=
        \frac{e^{-c^2/2}}{\sqrt{2\pi}}
        \int_c^{\infty} e^{(c^2-z^2)/2}\,dz \\
        &=
        \frac{e^{-c^2/2}}{\sqrt{2\pi}}
        \int_c^{\infty} e^{-(z-c)^2/2 - c(z-c)}\,dz \\
        &\leq
        \frac{e^{-c^2/2}}{\sqrt{2\pi}}
        \int_c^{\infty} e^{-(z-c)^2/2}\,dz
        = \tfrac12 e^{-c^2/2}.
    \end{align*}
    The inequality holds due to the fact that $e^{-c(z-c)}\leq1$ for
    $z\geq c$. Using a union bound we can then bound both sides as
    $\Pr(|Z|>c)\leq e^{-c^2/2}$. Finally, by setting $Z=(X-\mu)/\sigma$
    and $c=\beta$ we obtain the bound stated above.
\end{proof}

\end{document}